\theoremstyle{plain}
\theoremstyle{remark}
\newtheorem{remark}{Remark}
\theoremstyle{definition}
\theoremstyle{plain}
\newtheorem{theorem}{Theorem}
\newtheorem{problem}{Problem}
\newtheorem{lemma}{Lemma}
\newtheorem{observation}{Observation}
\newcommand*{\myDots}{.\kern-0.08em.\kern-0.08em.} 
\let\oldnl\nl
\newcommand{\nonl}{\renewcommand{\nl}{\let\nl\oldnl}}
\newcommand{\ABA}[1]{{\footnotesize\color{blue}[{\bf ABA:} \textsf{#1}]}}
\DeclareMathOperator*{\argmax}{argmax} 
\DeclarePairedDelimiter\abs{\lvert}{\rvert}%
\DeclarePairedDelimiterX{\norm}[1]{\lVert}{\rVert}{#1}
\begin{document}
\title{\LARGE \bf
Where to Drop Sensors from Aerial Robots\\ to Monitor a Surface-Level Phenomenon?
\thanks{This work is supported in part by National Science Foundation Grant No. 1943368.}
\thanks{$^{*}$ indicates equal contribution and authors are listed alphabetically}
}

\author{ Chak Lam Shek$^{*}$, Guangyao Shi$^{*}$, Ahmad Bilal Asghar, and Pratap Tokekar
\thanks{University of Maryland, College Park, MD 20742 USA {\tt\small [cshek1}, {\tt\small gyshi}, {\tt\small abasghar}, {\tt\small tokekar]@umd.edu}}
}

%

\maketitle
\thispagestyle{empty}
\pagestyle{empty}

\begin{abstract}

We consider the problem of routing a team of energy-constrained Unmanned Aerial Vehicles (UAVs) to drop unmovable sensors for monitoring a task area in the presence of stochastic wind disturbances. In prior work on mobile sensor routing problems, sensors and their carrier are one integrated platform, and sensors are assumed to be able to take measurements at exactly desired locations. By contrast, airdropping the sensors onto the ground can introduce stochasticity in the landing locations of the sensors. We focus on addressing this stochasticity in sensor locations from the path planning perspective. Specifically, we formulate the problem (Multi-UAV Sensor Drop) as a variant of the Submodular Team Orienteering Problem with one additional constraint on the number of sensors on each UAV. The objective is to maximize the Mutual Information between the phenomenon at Points of Interest (PoIs) and the measurements that sensors will take at stochastic locations. We show that such an objective is computationally expensive to evaluate. To tackle this challenge,  we propose a surrogate objective with a closed-form expression based on the expected mean and expected covariance of the Gaussian Process. We propose a heuristic algorithm to solve the optimization problem with the surrogate objective. The formulation and the algorithms are validated through extensive simulations.  
\end{abstract}

\section{Introduction}

Multi-robot systems have been widely used in scientific information gathering including exploring the ocean \cite{ludvigsen2021collaborating, shi2023robust}, tracking algal blooms \cite{jung2017development}, and monitoring soil \cite{tokekar2016sensor}. The planning problem on this topic is usually named Informative Path Planning (IPP), in which the research focus is on how to design planning algorithms to coordinate multiple robots to collect as much useful information as possible given the limited onboard resources (e.g., sensing and battery). 
In some cases, the robotic platform and the sensors for scientific monitoring are integrated systems and are treated as mobile sensors  as a whole \cite{hollinger2014sampling, ma2018data, tokekar2013tracking}. 
\begin{figure}[ht]
    \centering
    \includegraphics[scale=0.4]{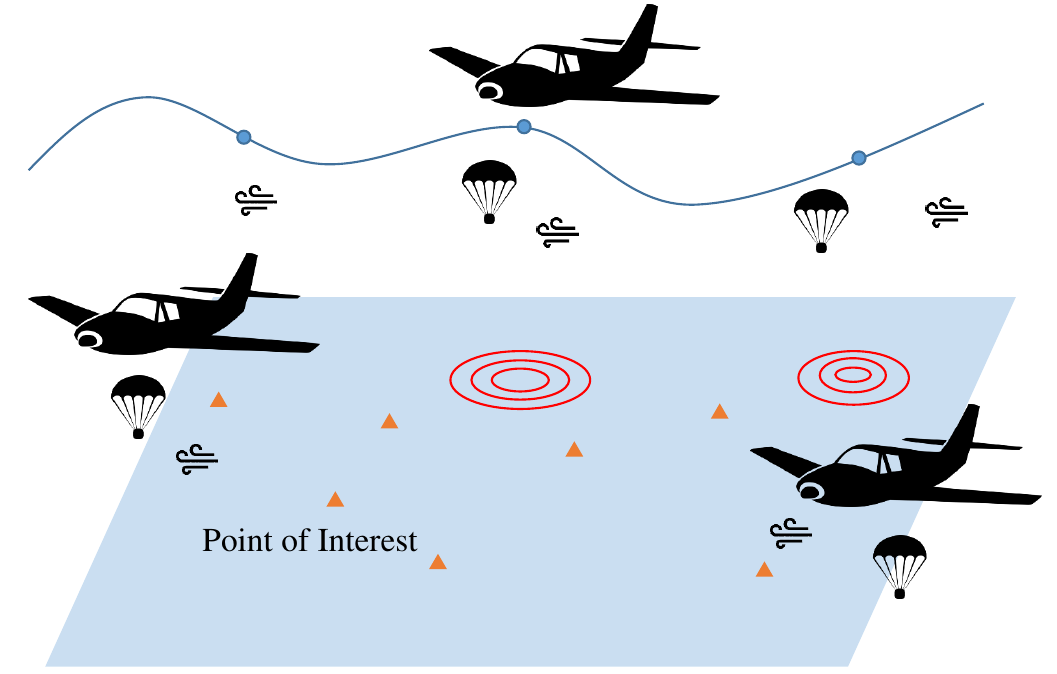}
    \caption{An illustrative example of airdropping sensors. }
    \label{fig:sensor_drop_example}
\end{figure}
In other cases, the robotic platforms are treated as carriers of sensors \cite{stephens2022integrated, marques2015critical, kalaitzakis2021marsupial, tokekar2016sensor, asghar2023risk, shi2022risk}, and they are separable. The research efforts for such cases are mainly devoted to finding collaborative route strategies for these mobile platforms to serve the sensors to finish the sampling tasks. Our research is also along this line and we are interested in how to airdrop sensors to an area of interest with a team of Unmanned Aerial Vehicles (UAVs).

Specifically, we consider the problem of airdropping multiple sensors to the ground with a team of budget-constrained UAVs to reduce the uncertainty of Points of Interest (PoIs) as shown in Fig. \ref{fig:sensor_drop_example}. If the UAVs can precisely drop the sensors to the desired locations, such a problem is closely related to the classic Team Orienteering Problem (TOP) \cite{chao1996team}. However, due to wind disturbances, when we release one sensor from the UAV, its landing location, i.e., the sampling location, is stochastic. This is the main difference from the existing research on mobile robotic sensors, in which authors usually assume that robots can take samples at precisely the desired location. Such a difference requires to rethink of the underlying optimization for planning. 

To this end, we propose a new variant of the TOP  for airdropping sensors with UAVs, in which the stochasticity of the sensor landing position is explicitly considered. However, the resulting optimization objective is computationally expensive to evaluate. To address this challenge, we resort to a Gaussian approximation approach \cite{dallaire2011approximate} to obtain one surrogate objective with one closed-form expression. With this surrogate objective, we show that the problem can be solved in polynomial time and near optimally.

In summary, the main contribution of this paper is:
\begin{itemize}
    \item We propose a variant of the Submodular Team Orienteering Problem to model the sensor dropping problem with aerial robots. 
    \item We propose one computationally efficient surrogate objective function for the proposed problem and propose a heuristic algorithm to solve it. 
    \item We demonstrate the effectiveness of our formulation and algorithm through simulations. 
\end{itemize}

The rest of the paper is organized as follows. We first
give a brief overview of the related work in Section \ref{sec:related_work}. Then, we explain the problem setup and formulation in Section  \ref{sec:problem_statement}. We introduce the technical approach in Section \ref{sec:technical_approach} and validate the formulation and the proposed framework in Section \ref{sec:evaluation}.

\section{Related Work}\label{sec:related_work}
In this section, we present the work most closely related to ours. We first discuss the related work on airdropping sensors, followed by stationary sensor placement and mobile sensor planning, and finally on estimating stationary fields with Gaussian Processes.

\subsection{Airdroping sensors}

Dropping resources from an aerial vehicle has long been of interest, particularly for military and search-and-rescue operations. For example, in military resupply missions, aircrafts are required to accurately deliver supplies to the target areas, taking into account geological factors and weather conditions. Extensive research has been conducted on low-level optimization of the release trajectory to achieve high precision in airdrop operations~\cite{mathisen2020autonomous, gerlach2016precision, leonard2019koopman, leonard2017probabilistic, iyer2020airdropping}. In this work, we focus on the complementary high-level planning of where to drop the sensors from multiple UAVs to monitor a surface-level phenomenon. We abstract the low-level trajectory control by assuming that for any given airdrop trajectory planner,  the associated uncertainty of the landing position of the sensor is known. Specifically, we focus on route-level  planning for multiple UAVs to deploy multiple sensors to the area of interest for environmental monitoring applications.

Our work is closely related to that of Gerlach et al.~\cite{gerlach2016precision}. They formulate the problem of dropping multiple payloads to multiple targets as a Traveling Salesperson Problem (TSP). However, there are two key differences between their work and ours. First, our objective is to reduce the uncertainty at Points of Interest (PoIs) by dropping sensors and we use an information-theoretic metric. In contrast, the objective in \cite{gerlach2016precision} is to minimize the risk encountered by the soldiers. Second, our problem involves multiple energy-constrained UAVs, which cannot be modeled as TSP or its variants.

\subsection{Sensor Placement and Mobile Sensor Planning}
The sensor placement problem aims to maximize the information gain or sensing quality by strategically selecting sensor deployment locations. The typical approach is to model the phenomenon as a Gaussian Process~\cite{williams2006gaussian} and use information theoretic measures for placing the sensors. The foundational work was done by Krause el al.~\cite{krause2008efficient} who showed that the partial monotonicity and submodularity allows a greedy placement to achieve a constant-factor approximation algorithm. This work was later extended to mobile sensor planning (also termed as informative path planning). Binney et al.~\cite{binney2010informative} introduced the additional constraint of identifying a feasible path that connects these selected sensing locations. One approach to finding such paths is to convert the problem into an orienteering instance with submodular rewards. In~\cite{roberts2017submodular} this problem is solved by constructing an additive approximation for the coverage objective to find a UAV path for image acquisition. A recursive greedy algorithm~\cite{chekuri2005recursive} is used in~\cite{binney2013optimizing,singh2007efficient} to solve the submodular orienteering problem for informative path planning. This approach provides guarantees for the submodular objective but
runs in quasi-polynomial time, limiting its use for large problem instances. 

In the context of a multi-robot setting, the orienteering problem can be solved iteratively, where the single robot performance guarantee can be extended to the multi-robot scenario~\cite{singh2007efficient}. Our work closely aligns with this body of work on informative path planning with a key difference. Because we are airdropping sensors, the exact sensing location depends on the wind field and is not known, unlike existing work. We show how to deal with this additional source of uncertainty.

\subsection{GP with Uncertain Inputs}
We use Gaussian Processes~\cite{williams2006gaussian} to model the spatial function that is to be estimated by the sensors. Since we do not know the exact locations the sensors will fall at before planning UAV paths, the input to GP regression is uncertain. It is shown that the predictive distribution for Gaussian processes with uncertain inputs may not be Gaussian in general in~\cite{girard2004approximate}. Various approaches have been used to deal with input uncertainty in GPs. In the Bayesian approach, the distribution with uncertain input locations can be obtained by integrating over the uncertainty of the locations ~\cite{muppirisetty2015spatial}. However, these integrals are analytically intractable in general. Taylor expansion about the uncertain locations is used in~\cite{mchutchon2011gaussian} to present an approximate method that requires the derivative of the mean of $f$. The Gaussian Approximation method~\cite{muppirisetty2015spatial,girard2004approximate,dallaire2011approximate} assumes that the posterior distribution is Gaussian and finds its expected mean and expected covariance by integrating over the uncertainty of the locations. For certain kernel functions, these co-variances can be computed analytically. We employ the Gaussian approximation method in this paper to handle the random sensor locations. 

\section{Problem Statement}\label{sec:problem_statement}
Consider a weighted graph $G = (V, E)$, where the vertex set $V$ represents locations that can be visited by a team of $m$ UAVs. The weight $w(u,v)$ of an edge $(u, v) \in E$ represents the time taken or energy spent by the UAVs to travel from vertex $u$ to vertex $v$. Let $(x_v, y_v, z_v)$ represent the coordinates of vertex $v$. Each vertex corresponds to a location where one of the UAVs can drop a sensor onto the ground below to observe the spatial field.
The sensor's landing position on the surface, denoted by $\textbf{q}_v$, can vary depending on the wind conditions at the drop location $v$ and the height of the drop location $z_v$. We assume that $\textbf{q}_v$ follows a normal distribution, specifically $\textbf{q}_v \sim \mathcal{N}(\bar{\textbf{q}}_v, \Sigma_v)$, and that $\bar{\textbf{s}}_v$ and $\Sigma_v$ are known for each $v\in V$. Each UAV $i\in[m]$ has a given number of sensors $k_i$ and limited amount of time (or energy) $T_i$ to visit some locations in $V$ and to drop the sensors from those locations. The path of UAV $i$ must start and end at its designated depot location $r_i\in V$. 

The purpose of dropping sensors is to observe the value of a spatial function $f$ at specific points of interest (POI) $U$ on the ground. Each sensor obtains a measurement of the underlying field with additive Gaussian noise. Since we may have fewer sensors than POI, and due to the stochastic nature of sensor drop, we will need to estimate the value of $f$ at POI. Consequently, there will be inherent uncertainty associated with these estimates. 
Gaussian Processes associate a random variable with each POI in $U$ and the joint distribution over $U$ can be used to quantify the information gained by the sensors dropped by the UAVs. 

Given paths $P = \{P_1,\ldots, P_m\}$ for the UAVs, let $S(P) = \{S_1,\ldots,S_k\}$ represent the corresponding sensor drop locations, and let $Q(P)$ be the random variable representing the sensor locations, i.e., for every drop location $v\in S$, the sensor location $\textbf{q}_v \in Q$.  Also, let the length of the path $\ell(P_i)$ denote the total time taken by the UAV $i$ to visit all the locations in $P_i$. Let $\eta$ be the time required to drop a sensor. Therefore, the total time of a path $P_i$ is given as $C(P_i)=\ell(P_i) + \abs{S(P_i)}\eta$. 

Let $\mathcal{F}_U$ represent the random variable associated with POI $U$ and let $\mathcal{F}_Q$ represent the random variable associated with sensor readings at locations in $Q$. Then $Pr(\mathcal{F}_U|\mathcal{F}_{Q(P)}=f_Q)$ is the prediction at $U$ given sensor readings at locations in $Q(P)$. To simplify notation, we will use $S$ and $Q$ going forward, without explicitly indicating their dependence on UAV paths $P$. We focus on the \emph{offline} planning problem~\cite{singh2009efficient} where the plan must be decided before dropping any sensor.

The mutual information -- as a function of the UAVs' paths -- between the random variables $\mathcal{F}_U$ and $\mathcal{F}_Q$ is defined as, 
\begin{equation*}
    MI(P) = H(\mathcal{F}_U) - H(\mathcal{F}_U|{\mathcal{F}_Q}),
\end{equation*}
where $H(\mathcal{X})$ represents the entropy of random variable $\mathcal{X}$. We now formally define the multi-UAV sensor drop problem.
\begin{problem}[Multi UAV Sensor Drop]
\label{pbm:main_problem}
Given the points of interest $U$, sensor drop locations in $G=(V, E)$ along with the mean $\bar{\textbf{q}}_v$ and covariance $\Sigma_v$ of sensor's location associated with each $v \in V$, $k_i$ sensors and budget $T_i$ for each UAV $i\in[m]$, find path $P_i$ rooted at the depot $r_i$ along with drop locations $S_i$ for each UAV $i\in[m]$ to maximize the mutual information, i.e.,
\begin{align}
     \max_{P_1,\ldots,P_m} &~MI(P) = H(\mathcal{F}_U) - H(\mathcal{F}_U|\mathcal{F}_Q)& \label{eq:obj}\\ 
    \text{s.t.}\quad &~ C(P_i) \leq T,  ~\forall i\in [m]\\
    &~ |S_i| \leq k_i,  ~\forall i\in [m]. \label{eq:prob:constrs:sensor_number}
\end{align}
\end{problem}

Note that given drop locations SS, the sensor locations in $Q$ are random. If the locations in $Q$ are deterministic, i.e., the sensors fall at the exact locations desired, and if points of interest $U$ are the same as the vertices in $V$, we get the traditional informative path planning problem~\cite{singh2007efficient,binney2010informative}.

Since the locations in $Q$ are themselves random variables, evaluating the probability distribution $Pr(\mathcal{F}_U|\mathcal{F}_Q)$ and its entropy is challenging. In the next section, we discuss how we address this challenge and present the planning algorithm.

\section{Technical Approach}\label{sec:technical_approach}
In this section, we discuss how to evaluate the objective function given in Problem~\ref{pbm:main_problem}. We then propose the planning algorithm to solve the problem.

\subsection{Gaussian Process with Stochastic Drop Locations}

 In order to evaluate the objective function~\eqref{eq:obj}, we need to calculate the entropy of the random variable $(\mathcal{F}_U|\mathcal{F}_Q)$. If the sensor locations in $Q$ were deterministic, this random variable would be a multivariate Gaussian, and its covariance matrix could be used to determine the entropy. However our data is of the form $\{\textbf{q}_i, f(\textbf{q}_i)+\epsilon_i\}_{i=1}^{\sum_j |S_j|}$ and $\textbf{q}_i\sim\mathcal{N}((\bar{\textbf{q}}_i, \Sigma_i))$. Then, since the locations of sensors are independent of each other, the probability distribution $Pr(\mathcal{F}_U|\mathcal{F}_Q)$ is given by integrating the distribution given fixed locations over random sensor locations, i.e., 
 \begin{align*}
     & Pr(\mathcal{F}_U|\mathcal{F}_Q) = \\
     & \int\myDots\int Pr(\mathcal{F}_U|\mathcal{F}_Q,\{\textbf{q}_1, \myDots, \textbf{q}_a\})\prod_{i=1}^{a}\big(Pr(\textbf{q}_i) \big)d\textbf{q}_i\myDots d\textbf{q}_a.
\end{align*}
This distribution is not Gaussian and there is generally no closed
form expression for this integral~\cite{girard2004approximate,muppirisetty2015spatial}.  Existing literature on Gaussian Processes with input uncertainty~\cite{girard2004approximate,muppirisetty2015spatial,mchutchon2011gaussian} resorts to approximations in order to solve this integral. A Monte Carlo approach by drawing samples of $\textbf{q}$ from uncertain location distributions is considered in~\cite{muppirisetty2015spatial}. Taylor expansion about $\bar{\textbf{q}}$ is used in~\cite{mchutchon2011gaussian} to present an approximate method that requires the derivative of the mean of $f$. The Gaussian approximation method~\cite{muppirisetty2015spatial,girard2004approximate,dallaire2011approximate} assumes that the posterior distribution is Gaussian and finds its expected mean and expected covariance by integrating over the uncertainty of the locations $\textbf{q}$. 
For the squared exponential covariance, the expected covariance for normally distributed sensor locations can be analytically computed exactly using the following expression~\cite{girard2004approximate,dallaire2011approximate}.  
\begin{align}
\label{eq:ex_cov}
\begin{split}
    &\Sigma_{QQ}(i,j) = \\
    &\frac{\sigma^2 \text{exp}\Big(  -\frac{1}{2} (\bar{\textbf{q}}_i - \bar{\textbf{q}}_j)^\top (W+\Sigma_i +\Sigma_j)^{-1} (\bar{\textbf{q}}_i - \bar{\textbf{q}}_j)\Big)}{| I+W^{-1}(\Sigma_i+\Sigma_j)(1-\delta_{ij}) |^{1/2}}
\end{split}
\end{align}
Here $\bar{\textbf{q}}_i$ and $\Sigma_i$ are the mean and covariance of the normally distributed sensor location $\textbf{q}_i$ in $Q$, and $W$ is a diagonal matrix where each diagonal element corresponds to a characteristic length scale for the respective input variable.

We use the Gaussian approximation method in this paper because it does not require sampling and is computationally tractable with a simple analytical expression for the covariance matrix. Moreover, since we are planning paths for UAVs offline, before getting any sensor readings, we can use this method to find the mutual information by just using the expected covariance as discussed below.

Since the Gaussian approximation method assumes that the distribution of $\mathcal{F}_U|\mathcal{F}_Q$ is a Gaussian distribution, and because $\mathcal{F}_U$ and $\mathcal{F}_Q$ are jointly Gaussian, the mutual information is given by 
\begin{align}
\label{eq:obj_approx}
\begin{split}
    MI &= H(\mathcal{F}_U)- H(\mathcal{F}_U|\mathcal{F}_Q) \\
    &= H(\mathcal{F}_U) + H(\mathcal{F}_Q) - H(\mathcal{F}_U,\mathcal{F}_Q)\\
    &= \frac{1}{2}\log \Big( \frac{\det(\Sigma_{UU}) \det(\Sigma_{QQ})}{\det(\bar{\Sigma})} \Big),\\
    \end{split}
\end{align}
where
\begin{align*}
    \bar{\Sigma} = \left[\begin{matrix}
    \Sigma_{UU} & \Sigma_{UQ}\\
    \Sigma_{QU} & \Sigma_{QQ}
    \end{matrix}  \right].
\end{align*}
We can use the expression~\eqref{eq:ex_cov} to evaluate $\Sigma_{UQ}(i,j)$ by replacing $\textbf{x}_i$ with the known location of $i^{th}$ point of interest in $U$ and $\Sigma_i$ by the null matrix.

\begin{observation}
The Objective function~\eqref{eq:obj} and the surrogate objective defined in Equation~\eqref{eq:obj_approx} are submodular and monotonically non-decreasing set functions in $S$.
\end{observation}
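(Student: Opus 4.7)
Both the exact objective in~\eqref{eq:obj} and the surrogate objective in~\eqref{eq:obj_approx} can be written as a mutual information $I(\mathcal{F}_U;\mathcal{F}_{Q(S)})$ between the target $\mathcal{F}_U$ and a collection of noisy measurements indexed by the drop set $S$, with the surrogate taking the jointly Gaussian model whose covariance is given by~\eqref{eq:ex_cov}. The plan is to treat the two in parallel, establishing monotonicity first and then submodularity.

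For monotonicity, the ``information never hurts'' principle yields, for $A\subseteq B$,
\begin{equation*}
    I(\mathcal{F}_U;\mathcal{F}_{Q(B)})=I(\mathcal{F}_U;\mathcal{F}_{Q(A)})+I(\mathcal{F}_U;\mathcal{F}_{Q(B)\setminus Q(A)}\mid\mathcal{F}_{Q(A)})\geq I(\mathcal{F}_U;\mathcal{F}_{Q(A)}),
\end{equation*}
because conditional mutual information is non-negative. For the surrogate, the same conclusion can be read directly off of~\eqref{eq:obj_approx}: extending the covariance with additional rows and columns for new drops leaves the Schur complement of $\Sigma_{UU}$ at most as large in the positive semidefinite order, so the ratio $\det(\Sigma_{UU})\det(\Sigma_{QQ})/\det(\bar{\Sigma})$ can only grow.

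For submodularity, I would express the marginal gain of adding a drop location $x\notin B$ to $A\subseteq B$ as
\begin{equation*}
    MI(A\cup\{x\})-MI(A)=I(\mathcal{F}_U;\mathcal{F}_{\textbf{q}_x}\mid\mathcal{F}_{Q(A)}),
\end{equation*}
and show that this quantity is non-increasing in $A$. The key structural property is that the measurement noise is independent across drops, so conditional on the underlying field $f$ the observations at distinct sensor locations are mutually independent. Under this conditional independence, the standard sensor-placement argument (see~\cite{krause2008efficient,singh2007efficient}) shows that each additional observation removes a non-increasing slice of the remaining uncertainty about $\mathcal{F}_U$, yielding the diminishing-returns inequality. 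Because the Gaussian approximation used in~\eqref{eq:ex_cov}--\eqref{eq:obj_approx} preserves the joint Gaussianity and the independent-noise structure across drops, the same reasoning carries over to the surrogate.

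The main obstacle is that the landing positions $\textbf{q}_v$ are themselves random, so the observations are noisy values of $f$ at \emph{random} rather than fixed points; the conditional-independence argument must therefore be applied after integrating over the wind-induced location distribution. This is precisely what the expected covariance in~\eqref{eq:ex_cov} accomplishes. Since the location disturbances at distinct drops are independent and also independent of the measurement noise, integrating them out preserves the conditional independence of the observations given $f$, and the submodularity argument then transfers to both objectives.
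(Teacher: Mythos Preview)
The paper states this result as an Observation and offers no proof, so there is nothing in the paper to compare your argument against directly. Your monotonicity argument via the chain rule for mutual information is correct for both objectives.

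The submodularity argument, however, has a real gap. You invoke conditional independence of the observations given the \emph{entire field} $f$ and then appeal to the Krause--Guestrin sensor-placement result. But that result yields submodularity of $S\mapsto I(X;Y_S)$ only when the $Y_v$ are conditionally independent given the \emph{target} $X$. Here the target is $\mathcal{F}_U$, the field restricted to the PoIs, not the full field. Conditioning on $\mathcal{F}_U$ alone does not decouple the observations: two sensors that land near one another but far from every point of $U$ remain correlated through the unobserved portion of the GP, and in such configurations the marginal gain $I(\mathcal{F}_U;\mathcal{F}_{\textbf{q}_x}\mid\mathcal{F}_{Q(A)})$ can \emph{increase} as $A$ grows (the conditional-suppressor or ``explaining away'' effect). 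Hence the step from ``conditional independence given $f$'' to ``diminishing returns with respect to $\mathcal{F}_U$'' does not follow as written; one would need $U$ to cover the domain, an additional structural assumption on the kernel that rules out suppressor effects, or a retreat to approximate submodularity. The same issue affects the surrogate: the expected covariance in~\eqref{eq:ex_cov} integrates out the randomness in the landing positions but does not manufacture conditional independence of the observations given $\mathcal{F}_U$.
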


\begin{algorithm}[ht!]\label{alg:SGA_algorithm}
    \caption{Sequential Greedy Assignment}
    \SetKwInOut{Input}{Input}
    \SetKwInOut{Output}{Output}
    \SetKwProg{Fn}{Function}{:}{}
    \SetKwFunction{SGA}{SGA}
    \Fn{\SGA{$G, \mathcal{T}, v_{s}, f, m, C, K$}}{ 
    \Input{\begin{itemize}
        \item A  graph $G$ representing the environment
        \item Time budget $\mathcal{T}=\{T_1, \ldots, T_m\}$ for each robot 
        \item Starting positions $v_s$ and  objective oracle $g$
        \item \# of robots $m$ and cost function $C$
        \item \# of the sensor for each robot $K=\{k_1, \ldots, k_m\}$
    \end{itemize}}
    \Output{a collection of paths $\{{P}_j\}_{j=1}^m$}
    $G_c \gets$ metric matrix completion of $G$\\ 
    $\mathcal{A} \gets \emptyset$  \\ 
    \For{$j \gets 1$ \KwTo $m$ }{
    ${P}_j \gets GCB(\mathcal{A}, T_j, G_c, g, {C}, v_{s_j}, k_j)$\;
   $ \mathcal{A} \gets \mathcal{A} \cup {P}_j$ 
    }
    \textbf{return} $\{{P}_j\}_{j=1}^m$
    }
    \textbf{end}
\end{algorithm}
\subsection{Planner}


The submodularity and monotonicity of the surrogate objective function allow us to formulate Problem~\ref{pbm:main_problem} as a submodular TOP. However, there is one additional constraint in Problem~\ref{pbm:main_problem} that is not present in standard submodular TOP, that of the number of sensors $k_i$ that each robot is able to deploy. We address this problem using the following observation.
\begin{lemma} \label{lemma:number_of_sensors} 
In a complete graph with $N\geq k_i$ vertices for all $i$, there always exists an optimal solution where the robot $i$'s path consists of no more than $k_i$ vertices, excluding the starting vertex.
\end{lemma}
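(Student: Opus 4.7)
The plan is to prove the lemma by an exchange/pruning argument: starting from any optimal solution, repeatedly remove superfluous vertices from each robot's path until the bound is satisfied, while preserving feasibility and the objective value.

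First, I would take an arbitrary optimal solution $\{P_1^*, \ldots, P_m^*\}$ with corresponding drop sets $\{S_1^*, \ldots, S_m^*\}$. Suppose for some robot $i$, the path $P_i^*$ contains strictly more than $k_i$ vertices excluding the starting vertex $r_i$. Since the sensor-count constraint gives $|S_i^*| \leq k_i$, at most $k_i$ of those visited non-starting vertices can actually be drop locations, so there exists at least one vertex $v \in P_i^* \setminus (\{r_i\} \cup S_i^*)$ that is visited but never hosts a sensor drop.

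Next, I would delete $v$ from $P_i^*$ to obtain a new closed tour $P_i'$ (starting and ending at $r_i$) that visits the same sequence of remaining vertices. Because Algorithm~\ref{alg:SGA_algorithm} operates on the metric completion $G_c$, the triangle inequality holds, so $\ell(P_i') \leq \ell(P_i^*)$; since $|S(P_i')| = |S(P_i^*)|$, we also have $C(P_i') \leq C(P_i^*) \leq T_i$, preserving the budget constraint. The drop set $S_i^*$ and thus the random variable $\mathcal{F}_Q$ are unchanged, so the mutual information objective is unaffected and the modified collection of paths remains optimal. Iterating this deletion on robot $i$, and then across all robots, strictly reduces the number of non-drop vertices at each step, so the process terminates after finitely many steps with an optimal solution in which every robot $i$'s path contains at most $k_i$ vertices besides $r_i$.

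The main obstacle is verifying that pruning a single vertex does not inadvertently violate feasibility or alter the objective. This is handled cleanly because (i) the triangle inequality in $G_c$ guarantees the pruned tour is no longer than the original, (ii) the depot-rooted structure is preserved since only an interior vertex is removed, and (iii) the drop locations — on which the mutual information solely depends via $\Sigma_{QQ}$, $\Sigma_{UQ}$, and $\Sigma_{UU}$ in Equation~\eqref{eq:obj_approx} — remain identical. The condition $N \geq k_i$ simply ensures that a path of length up to $k_i$ is realizable in $G_c$, so no contradiction with the optimal budget usage arises.
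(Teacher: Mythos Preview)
Your proposal is correct and follows essentially the same approach as the paper: both arguments observe that any extra (non-drop) vertex on robot $i$'s path can be shortcut away using the triangle inequality in the metric completion without increasing cost or changing the drop set, yielding an optimal solution obeying the $k_i$ bound. The paper phrases this as a proof by contradiction while you present it as an explicit iterative pruning, but the underlying mechanism is identical.
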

\begin{proof}
The proof follows by contradiction. Suppose there is an instance where no optimal solution has at most $k_i$ vertices along robot $i$'s path. The robot is allowed to deploy at most $k_i$ sensors. Therefore, there must be one or more vertices along the robot path that no sensor is dropped. Since the graph is a complete metric graph, we can ``shortcut'' such vertices without increasing the cost of the path. Therefore, we can recover a solution that consists of exactly $k_i$ vertices. This is a contradiction proving the original claim.
\end{proof}

With this insight, we present our algorithm (Algorithm \ref{alg:SGA_algorithm}) to solve the Problem \ref{pbm:main_problem}.
We first take the metric completion of the input graph. Recall that for a weighted graph $G(V, E)$, each edge $(u,v) \in E$ is associated with a cost $w(u,v)$. In the preprocessing step, we generate a complete graph $G^{\prime}=(V, E^{\prime})$ using $G$, where the edge cost $w^{\prime}(u,v)$ is defined as the length of the shortest path between $u$ and $v$ in $G$. Then, we sequentially call a subroutine, Generalized Cost-Benefit (GCB), to compute a path for each robot. Compared to the original GCB algorithm~\cite{zhang2016submodular}, in Algorithm \ref{alg:GCB_algorithm}, we add one extra control condition in the while loop to account for the constraint, Eq. \eqref{eq:prob:constrs:sensor_number}, on the number of available sensors using Lemma~\ref{lemma:number_of_sensors}. 

\begin{remark}
The constraints imposed on the paths of UAVs, which limit them to at most $k_i$ vertices and a maximum length of $T_i$ for UAV $i$, can be regarded as a partition matroid constraint. It has been shown in~\cite{goundan2007revisiting} that an $\alpha$-approximate greedy step for submodular maximization over a matroid yields an approximation ratio of $\frac{1}{\alpha+1}$. Hence, given an $\alpha$-approximation algorithm to solve the submodular orienteering problem for a single UAV, Algorithm~\ref{alg:SGA_algorithm} results in a $\frac{1}{\alpha + 1}$ approximation ratio for maximizing Objective~\eqref{eq:obj_approx} for multiple UAVs.  When the paths of all the UAVs are constrained to be of at most $T$ length and $k$ vertices, we get a uniform matroid resulting in $1-\frac{1}{e^\alpha}$ approximation ratio.
\end{remark}

\begin{remark}
A quasi-polynomial time recursive greedy algorithm to solve the single vehicle orienteering problem with submodular rewards is given in~\cite{chekuri2005recursive}, resulting in $\alpha=O\log(\texttt{OPT})$. In this paper we use Generalized Cost Benefit (GCB) algorithm to solve the single UAV problem as it has better runtime than the recursive greedy algorithm~\cite{zhang2016submodular}.
\end{remark}

\begin{algorithm}\label{alg:GCB_algorithm}
    \caption{General Cost-Benefit (GCB)}
    \SetKwInOut{Input}{Input}
    \SetKwInOut{Output}{Output}
    \SetKwProg{Fn}{Function}{:}{}
    \SetKwFunction{FMain}{GCB}
    \Fn{ \FMain{$\mathcal{A}, T , G , f , {C}, v_s, k$} }{
    \Input{
    \begin{itemize}
        \item Set for selected vertices $\mathcal{A}$
        \item Budget $T$, a complete graph $G$
        \item Objective oracle $g$ and starting node $v_s$
        \item Cost function ${C}$
        \item \# of sensor $k$
    \end{itemize}
    }
    \Output{A set of selected vertices $S \subseteq V$}
    $S \gets \{v_s\}$, ~$V^{\prime} \gets V \setminus \{\mathcal{A} \cup v_s \}$\\ 
    \While{$V^{\prime} \neq \emptyset ~\textbf{and} ~ k > 0$}{

    \For{$v \in V^{\prime}$}{
    $\Delta_f^v \gets g(\mathcal{A} \cup S \cup v) - f(\mathcal{A} \cup S)$ \\
    $\Delta_c^v \gets {C}(S \cup v) - {C}(S)$
    }
    $v^* = \argmax \{\frac{\Delta_f^v}{\Delta_c^v} \mid v \in V^{\prime}\}$ \\
    \If{${C}(S \cup v^*) \leq T  $}{
    $k \gets k-1$ \\
    $S \gets S \cup \{v^*\}$
    }
    $V^{\prime} \gets V^{\prime} \setminus v^*$
    }
    \textbf{return} TSP($S$)
    }
    \textbf{end} \\
\end{algorithm}
\section{Evaluation}\label{sec:evaluation}

In this section, we evaluate the performance of our algorithm through a series of numerical experiments. 
We first explain the setup for the simulation. Then, we will show one qualitative example to illustrate the difference between the proposed approach and the baseline. Next, we will quantitatively evaluate the performance of the proposed approaches w.r.t. the uncertainty reduction of PoIs. Moreover, we will show the running time of the proposed algorithm w.r.t. the number of robots.

\subsection{Experimental Setup}
The flying object model used in this study is based on the work described in \cite{yang2019recursive}. This model captures the motion of the sensors, considering the gravity, the sensors' surface area, and the speed of the wind. The sensor mass is set to 10kg. The surface coefficient is 1 and the vertical height is 500m.

We begin by defining the map, ground truth, and wind field, as shown in Fig. \ref{fig:setting}. The map provides labels for all the potential dropping points and PoIs. The ground truth is generated by combining multiple Gaussian functions. Data points sampled from the ground truth are used to learn the kernel function, where we employ the RBF function. The wind field indicates the speed at specific locations on the map. By combining the sensor motion model with the wind field, we can estimate the landing position of the sensors.

Using a given kernel, the Algorithm \ref{alg:SGA_algorithm} is applied to search for a set of sensor dropping locations which is an approximate solution to the main problem. The final sensor locations are determined by sampling from the flying object model with uncertainty. Once the sensor locations are obtained, we can measure the environmental values and compute the posterior of PoIs based on these measurements. 

\begin{figure}[ht]
    \centering
    \subfloat[]{
    \includegraphics[width=0.23 \textwidth]{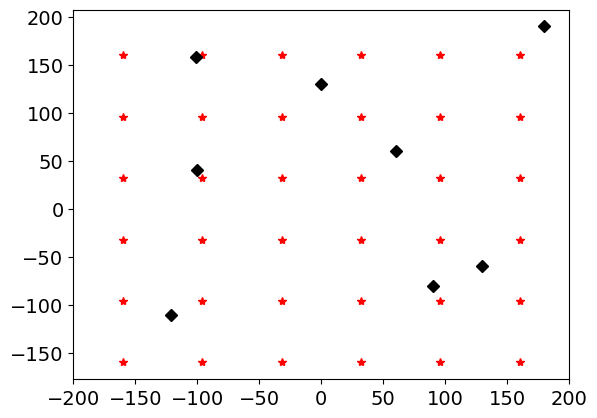}
    \label{fig:Graph}
    }
    \subfloat[]{
    \centering
    \includegraphics[width=0.23 \textwidth]{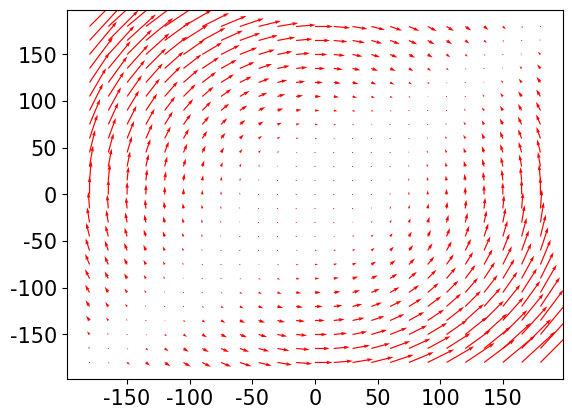}
    \label{fig:wind}
    }
    \caption{Simulation setup. (a) The map of drop location and PoIs. The drop locations are red crosses, which are the target points from where the UAVs can drop sensors. The PoIs are denoted by black diamonds. These PoIs represent specific locations where we are interested in measuring the sensing value. (b) Wind field}
    \label{fig:setting}
\end{figure}

\subsection{An qualitative example}
\begin{figure*}[ht]
    \centering
    \subfloat[]{
    \includegraphics[width=0.30 \textwidth]{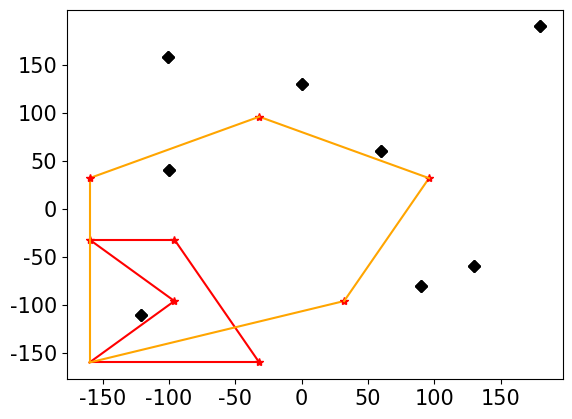 }
    \label{fig:stochastic_path}
    }
    \subfloat[]{
    \includegraphics[width=0.30 \textwidth]{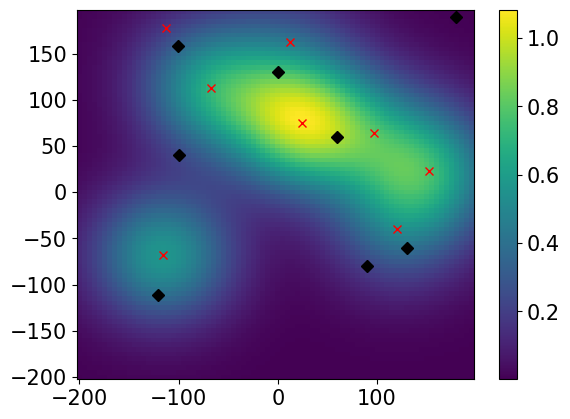 }
    \label{fig:stochastic_pred}
    }
    \subfloat[]{
    \centering
    \includegraphics[width=0.30 \textwidth]{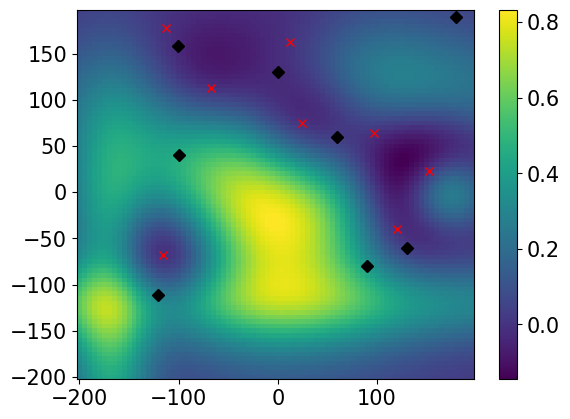}
    
    \label{fig:stochastic_error}
    }
    \caption{ The simulation result of our algorithm with 2 UAVs and 4 sensors each;(a) UAV routes for dropping sensors (Black dot:PoIs, Red star:sensor dropping points, Blue dot: the mean of sensors' landing locations (b) Predicted value (Red cross:the sensors' landing locations ) (c) Error between predicted value and ground truth  }
    \label{fig:Our_appraoch}
\end{figure*}

In the following, we present a comparison between a baseline approach and our proposed method using the defined settings. The experiment focuses on a scenario with two UAVs, where each UAV is equipped with four sensors. The UAVs are allocated a distance budget of 870 units to drop all the sensors along their respective paths.
\subsubsection{Baseline}
In the baseline case (Fig \ref{fig:Baseline}), the UAVs tend to drop a higher number of sensors in areas with a higher concentration of PoIs. The objective is to ensure that each sensor can cover one or more PoIs. However, due to the uncertainty introduced by the wind, the sensors tend to cluster in smaller regions. As a result, the four sensors located around coordinates (0,100) are only capable of accurately estimating two PoIs' value, while the remaining PoIs are not sufficiently covered. This can be observed in Fig \ref{fig:Baseline}, where the two PoIs in the lower right corner exhibit a significantly higher error of estimation.
\subsubsection{Our Approach}
Our approach, on the other hand, considers the impact of wind uncertainty and prefers to drop sensors in a wider area. As shown in Fig. \ref{fig:Our_appraoch}, the wind blows the sensors to a broader coverage area, allowing them to reach and cover more PoIs. This broader coverage results in a significant reduction in the error of PoI estimation compared to the baseline case. Additionally, it is worth noting that the areas where the sensors are dropped but do not have high concentration of PoIs exhibit high error rates. This demonstrates the effectiveness of our approach in adapting to the wind uncertainty and achieving better coverage of the target area.
\begin{figure*}[ht]
    \centering
    \subfloat[ UAV routes for dropping sensors]{
    \includegraphics[width=0.30 \textwidth]{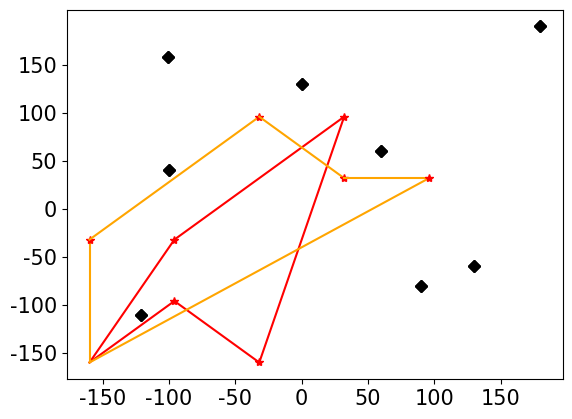 }
    \label{fig:deterministic_path}
    }
    \subfloat[Predicted value]{
    \includegraphics[width=0.30 \textwidth]{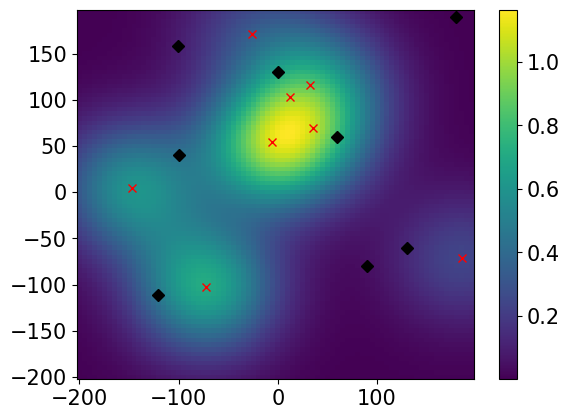 }
    \label{fig:deterministic_pred}
    }
    \subfloat[Error between predicted value and ground truth]{
    \centering
    \includegraphics[width=0.30 \textwidth]{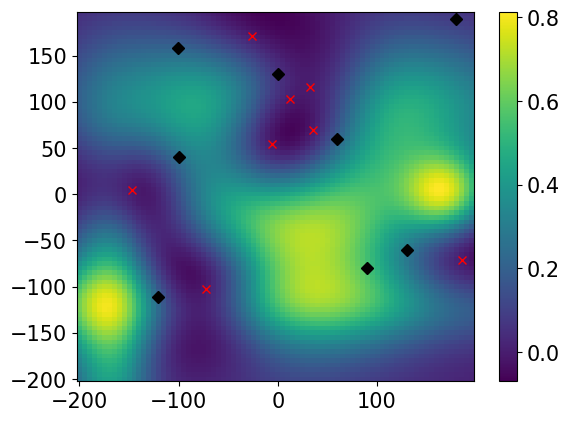}
    
    \label{fig:deterministic_error}
    }
    \caption{The simulation result of baseline algorithm with 2 UAVs and 4 sensors each;(a)UAV routes for dropping sensors (Black dot: PoIs, Blue star: sensor dropping points, Blue dot: the mean of sensors' landing locations (b) Predicted value (Red cross: the sensors' landing locations ) (c) Error between the predicted value and ground truth}
    \label{fig:Baseline}
\end{figure*}

\subsection{Comparisons with Baselines}
In this section, we compare the MSE of three different approaches across three different scenarios.  The MSE is computed as the sum of the square of the difference between the posterior of the PoIs and the ground truth values of the PoIs. In the first two scenarios, we assume that the wind speed is uniform and the variance of landing location is the same for all dropping nodes. In the first scenario, the final location of a sensor follows a Gaussian distribution with a variance of 900. Two UAVs are deployed, with each carrying 4 sensors. In the second scenario, the final location of a sensor follows a Gaussian distribution with a variance of 820. Two UAVs are deployed, with each carrying 3 sensors. In both of these scenarios, our approach demonstrates approximately a 10\% improvement in MSE compared to the baseline approach. The random selection approach, on the other hand, results in an MSE of 1.

The third scenario introduces non-uniform uncertainty w.r.t. the drop point location, where the variance is a function of the non-uniform wind speed. Once again, our approach consistently outperforms the baseline approach, achieving a 12\% improvement in MSE. These results highlight the effectiveness of our approach in mitigating the impact of uncertainty in different scenarios and achieving more accurate sensor placements.

\begin{figure}[ht]
    \centering
    \includegraphics[width=0.25 \textwidth]{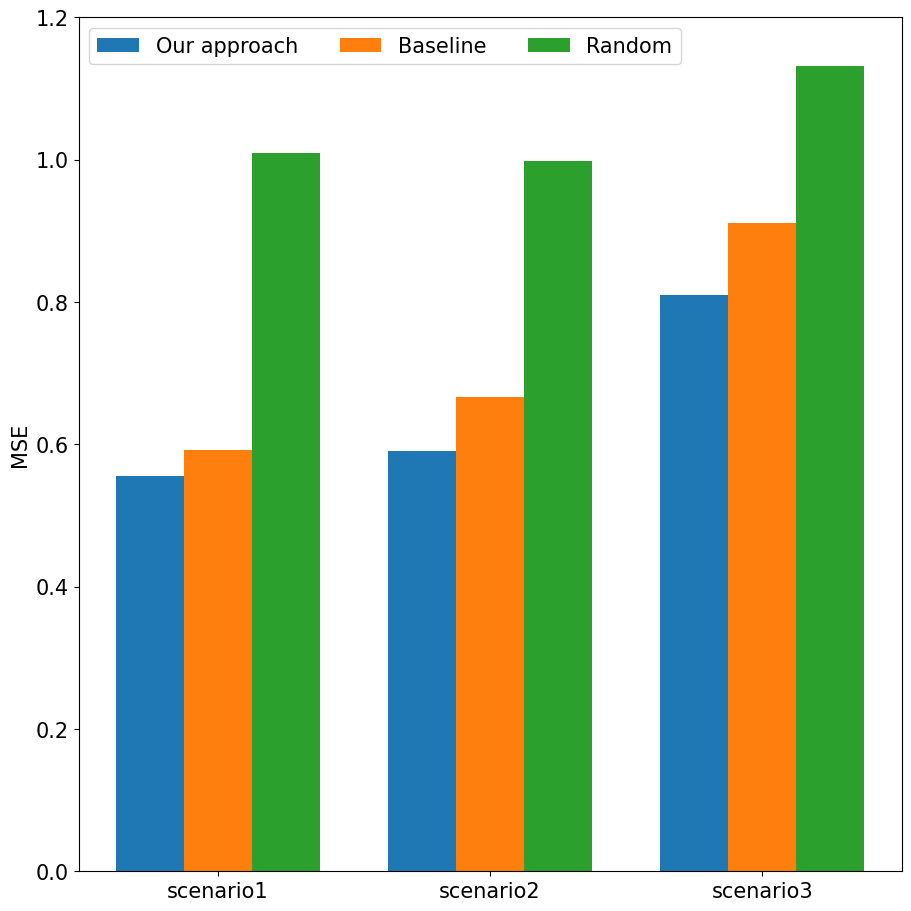}
    \label{fig:MSE}
    
    \caption{The MSE of different approaches and scenarios.}
\end{figure}
\subsection{Running Time}
Lastly, we demonstrate the scalability of our approach. In comparison to the baseline approach, our approach may have a slightly longer running time in each scenario. However, both approaches grow polynomially in run time with the number of sensors per UAV.

To further evaluate the computational performance, we also simulated a brute-force approach. The brute-force approach generates all possible combinations of sensor dropping points within the budget constraint and selects the set with the highest objective value. The runtime of the brute-force approach grew exponentially, taking hours to days to complete due to the factorial computation of all possible combinations. This stark contrast highlights the effectiveness and efficiency of our approach in finding nearly optimal solutions for sensor placement in a timely manner.
\begin{figure}[ht]
    \centering
    \includegraphics[width=0.30 \textwidth]{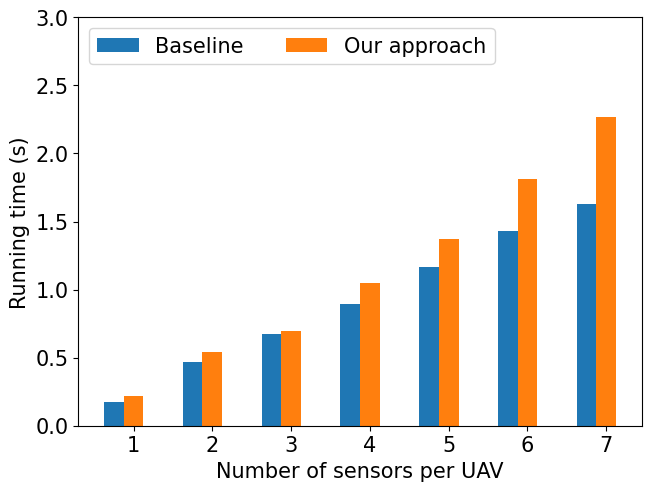}
    \label{fig:running_time}
    
    \caption{The running time with different numbers of sensors per agent in the scenarios of 2 UVAs.}
\end{figure}
\section{Conclusion}
This paper studies the problem of routing a team of UAVs to drop sensors to reduce the uncertainty of PoIs. The problem is formulated as a variant of TOP. To reduce the computational cost in the evaluation of the objective, we propose one surrogate objective with closed-form expression based on Gaussian approximation. A heuristic algorithm (SGA) is proposed to solve the relaxed problem with the surrogate objective. The formulation and the algorithm are validated in numerical simulation.



\newpage

\bibliographystyle{IEEEtran}
\bibliography{IEEEabrv, main}

\end{document}